\documentclass[acmsmall,nonacm,screen]{acmart}

\usepackage{tikz}
\usepackage{multirow}
\usepackage{tabularx}

\definecolor{vizblue}{HTML}{2A78D6}
\definecolor{vizorange}{HTML}{EB6834}
\definecolor{inkgray}{HTML}{52514E}

\newcommand{\pending}[1]{\textcolor{red}{[#1]}}
\newcommand{\E}{\mathbb{E}}
\newcommand{\beff}{\beta_{\mathrm{eff}}}
\newcommand{\sig}{\textsuperscript{\dag}}
\newcommand{\unadj}{\textsuperscript{*}}

\begin{document}

\title{Do Artificial Neural Networks Need Dendritic Nanotubes?}
\subtitle{Two Reasons for Shrinking Importance Estimates in Continual Learning}

\author{Saurav K. Aryal}
\email{saurav.k.aryal@gmail.com}
\affiliation{%
  \institution{Howard University}
  \department{Institute for Human Centered Artificial Intelligence}
  \city{Washington}
  \state{DC}
  \country{USA}}
\renewcommand{\shortauthors}{S. K. Aryal}

\begin{abstract}
Regularization-based continual learning protects each parameter in proportion to an estimate of its importance, yet these estimates come from finite samples and heuristics and are used as if they were exact. We report four studies that start from a biologically inspired mechanism and end with a statistical explanation of it. Motivated by dendritic nanotubes, we let transient lateral links diffuse importance tags between hidden units. Links between co-active units gave no significant gain, but random links with lifetimes of 25 or 200 steps raised final accuracy on 20-task Permuted MNIST by 1.5 to 2.6 points. We show that random diffusion implements linear shrinkage of importance toward the layer mean in expectation. A simulation of the link process alone predicts shrinkage weights of 0.23 and 0.52 for these lifetimes, and graph-free shrinkage at 0.25 and 0.5 reproduces both the accuracy and the dispersion of the consolidated importance, while every activity-dependent rule did worse than random links. A five-seed screen across EWC, SI, and MAS found nominal gains for SI and MAS on Permuted MNIST and losses on task-incremental Split CIFAR-100, where forgetting was small and shrinkage cost plasticity. Varying the number of importance samples then separates two mechanisms on Permuted MNIST. For EWC, the gain fell by about ten points per tenfold increase in samples, from 19.6 points with 10 samples to $-0.4$ with 1000, a variance effect. For MAS, a gain of about one point persisted at 1000 samples, consistent with a bias in the importance measure rather than noise. Shrinkage had no detectable effect for MAS on a domain-incremental CIFAR-100 benchmark. Finally, we show that a standard heteroscedastic empirical Bayes rule shrinks the largest Fisher-type estimates most, which may explain its poor results, and point to a log-scale alternative.
\end{abstract}

\ccsdesc[500]{Computing methodologies~Lifelong machine learning}
\ccsdesc[300]{Computing methodologies~Neural networks}
\ccsdesc[300]{Computing methodologies~Regularization}

\keywords{continual learning, catastrophic forgetting, elastic weight consolidation, synaptic intelligence, memory aware synapses, shrinkage estimation, empirical Bayes, Fisher information, dendritic nanotubes}

\maketitle

\section{Introduction}\label{sec-intro}

Neural networks trained on a sequence of tasks tend to overwrite what they learned before, a failure known as catastrophic forgetting \citep{mccloskey1989catastrophic,goodfellow2013empirical}. Regularization-based methods address it by anchoring each parameter near its value after earlier tasks, with a strength proportional to an estimate of that parameter's importance. Elastic weight consolidation (EWC) uses the diagonal of the Fisher information \citep{kirkpatrick2017overcoming}, synaptic intelligence (SI) accumulates each parameter's contribution to the decrease of the loss along the training path \citep{zenke2017continual}, and memory aware synapses (MAS) measure how strongly the network output depends on each parameter \citep{aljundi2018memory}. In all three, the importance vector is the method's entire account of what should be preserved.

These vectors are estimates, and noisy ones. EWC is usually computed from a modest number of single-example gradients, SI from correlated steps of one optimization trajectory, and MAS from output gradients on a sample of inputs. How the Fisher information is computed changes the performance of EWC substantially \citep{vandeven2025computation}, and the empirical Fisher differs systematically from the true one \citep{kunstner2019limitations,martens2020new}. Yet the methods treat the estimates as exact. Statistics has a standard response to estimating many quantities at once. When three or more normal means are estimated together, shrinking the individual estimates by a suitable amount toward a common value lowers the expected total squared error \citep{stein1956inadmissibility,james1961estimation}, and empirical Bayes methods choose the amount of shrinkage from the data \citep{efron1975data}. An importance vector poses a problem of this kind, with hundreds of thousands of means. The simplest shrinkage operator is familiar under another name. Pulling every importance value toward its group mean by a fixed fraction adds to the method's own penalty an L2 penalty toward the anchor that is uniform within each group, and L2 penalties of this kind are a standard baseline \citep{kirkpatrick2017overcoming}. To our knowledge, however, nobody has studied systematically when and why shrinking importance estimates helps in regularization-based continual learning.

We did not start from statistics. This work began with a biological observation. \citet{chang2025intercellular} identified dendritic nanotubes, thin tubes that connect the dendrites of neighboring neurons, in mammalian cortex. In dissociated neurons these actin-rich tubes propagated calcium over long distances, and they actively transported small molecules and amyloid-$\beta$. Calcium signals gate synaptic plasticity \citep{malenka2004ltp}, and synaptic tagging and capture describes how a local tag decides which synaptic changes become permanent \citep{frey1997synaptic,redondo2011making}. A channel that moves plasticity signals between neurons is therefore a natural candidate for a cross-cellular consolidation mechanism. We implemented the idea as transient lateral links that diffuse importance tags between the hidden units of a layer and tested it on Permuted MNIST. The mechanism failed in the form we expected and helped in a form we did not. Links formed between co-active units gave no significant gain, while links formed at random reliably improved retention. This paper reports how that observation led to shrinkage, and what shrinkage does for standard importance estimators.

We make four contributions.
\begin{itemize}
\item We report a negative result for two biologically motivated mechanisms, dense lateral connections pruned during training and transient links between co-active units, and a positive result for transient random links on 20-task Permuted MNIST (Studies 1 and 2).
\item We show that random links implement linear shrinkage of importance toward the layer mean in expectation, with a weight set by the link lifetime and formation rate (Proposition~\ref{prop-links}). A simulation of the link process alone predicts the weights, and graph-free shrinkage at those weights reproduces both the accuracy and the dispersion of the consolidated importance.
\item We screen fixed and empirical Bayes shrinkage across EWC, SI, and MAS on Permuted MNIST and Split CIFAR-100 to see where shrinkage may help and where it costs plasticity (Study 3).
\item We separate two reasons why shrinkage helps by varying the number of samples used to estimate importance. On Permuted MNIST, the benefit for EWC is a variance effect that vanishes with enough samples, while the benefit for MAS has a component that persists, which we read as bias (Study 4). We also show that a standard heteroscedastic empirical Bayes rule shrinks the largest Fisher-type estimates most (Proposition~\ref{prop-eb}), which may explain its poor results.
\end{itemize}

The experiments use small networks and standard benchmarks, and they ran on a single laptop. We therefore present the work as a careful empirical note built around a simple mechanism, not as a new state of the art. Section~\ref{sec-background} reviews related work, Section~\ref{sec-methods} defines the estimators, the shrinkage operators, and two propositions, Section~\ref{sec-design} describes the four studies, Section~\ref{sec-results} reports the results, and Section~\ref{sec-discussion} discusses their implications and limitations.

\section{Background and Related Work}\label{sec-background}

\subsection{Quadratic consolidation penalties}
Regularization-based methods add a penalty that keeps parameters close to their values after earlier tasks \citep{delange2022continual,wang2024comprehensive}. With anchor $\theta^{*}$ and importance $\Omega$, the penalty is $\lambda\sum_i \Omega_i(\theta_i-\theta^{*}_i)^2$. EWC derives $\Omega$ from a Laplace approximation to the posterior over earlier tasks and uses the diagonal Fisher information \citep{kirkpatrick2017overcoming}. Online variants accumulate importance over tasks and keep a single anchor \citep{schwarz2018progress,huszar2018note}, as we do. SI and MAS replace the Fisher with path-integral and output-sensitivity measures \citep{zenke2017continual,aljundi2018memory}, and \citet{chaudhry2018riemannian} combine Fisher and path-integral terms. Several methods assign importance to units rather than to single weights \citep{ahn2019uncertainty,jung2020continual}, which is the granularity of our Studies 1 and 2. Bayesian approaches make the uncertainty of each parameter explicit, through structured Laplace approximations \citep{ritter2018online}, variational inference \citep{nguyen2018variational,ebrahimi2020uncertainty}, or metaplasticity that scales learning by synaptic uncertainty \citep{bonnet2025bayesian}. \citet{benzing2022unifying} showed that the importance measures of SI and MAS are closely related to the square root of the Fisher information, a relation we return to in Section~\ref{sec-discussion}.

\subsection{Estimating importance}
Each importance measure is an estimate, and the estimators differ in how much noise and bias they carry. The diagonal Fisher can be computed with labels sampled from the model, which gives the true Fisher, or with the observed labels, which gives the empirical Fisher, and the two can differ sharply \citep{kunstner2019limitations,martens2020new}. \citet{vandeven2025computation} showed that how EWC computes the Fisher, including the number of samples, has a large effect on its performance. Importance estimates also become less reliable when data are noisy or shift between tasks. In EEG-based emotion recognition, \citet{peire2026affect} found that importance heuristics degraded under noise and covariate shift and that importance accumulated over tasks over-constrained the model. None of these works shrinks the estimates.

\subsection{Shrinkage and empirical Bayes}
When three or more independent normal means with a common known variance are estimated together, the vector of sample means is inadmissible under total squared error. Shrinking it by a suitable amount toward a fixed point lowers the expected total error for every value of the means \citep{stein1956inadmissibility,james1961estimation}, and the same holds for shrinkage toward the average of the estimates when there are four or more. Empirical Bayes reads such estimators as posterior means under a prior whose spread is estimated from the data, which sets the shrinkage weight automatically \citep{efron1975data}. Shrinkage of the same kind regularizes large covariance matrices \citep{ledoit2004well}. In our setting the means are the importance values of individual parameters or units, and the common value is their mean within a layer or a parameter tensor. Importance estimates are neither normal nor equally noisy, so these results motivate shrinkage rather than guarantee that it helps.

\subsection{Biological motivation}
Synaptic tagging and capture proposes that a transient local tag marks the synapses whose changes may become permanent \citep{frey1997synaptic,redondo2011making}. Models of synapses with internal states on several timescales, and of metaplasticity, show how such signals can support continual learning \citep{benna2016computational,kaplanis2018continual,laborieux2021synaptic}, and synaptic plasticity has been described as Bayesian inference over weights \citep{aitchison2021synaptic}. Tunneling nanotubes transport material between cells \citep{rustom2004nanotubular}, and \citet{chang2025intercellular} identified dendrite-to-dendrite nanotubes in mammalian cortex that propagated calcium in dissociated neurons and transported small molecules and amyloid-$\beta$. Our transient links are a coarse engineering abstraction of such a channel, and nothing in this paper tests nanotube physiology.

\section{Methods}\label{sec-methods}

\subsection{Setting}
A network with parameters $\theta$ learns tasks $t=1,\dots,T$ in sequence. After task $t$ it stores an anchor $\theta^{(t)}$ and adds that task's normalized importance to a running total,
\begin{equation}\label{eq-accum}
\Omega^{(t)} = \Omega^{(t-1)} + \frac{\tilde{\omega}^{(t)}}{\operatorname{mean}\bigl(\tilde{\omega}^{(t)}\bigr)}, \qquad \Omega^{(0)}=0,
\end{equation}
where $\tilde{\omega}^{(t)}$ is the importance estimate for task $t$, possibly shrunk, and the mean runs over all protected parameters (Studies 3 and 4) or over the units of a layer (Studies 1 and 2). While learning task $t$, the network minimizes
\begin{equation}\label{eq-penalty}
\mathcal{L}_t(\theta) + \lambda \sum_i \Omega^{(t-1)}_i \bigl(\theta_i - \theta^{(t-1)}_i\bigr)^2 .
\end{equation}
The normalization in Eq.~(\ref{eq-accum}) gives every task the same protection budget. A shrinkage operator therefore redistributes protection among parameters without changing its total, and $\lambda$ has the same meaning in every arm.

\subsection{Importance estimators}
\paragraph{Unit-level squared gradients (Studies 1 and 2)} For unit $u$ of a layer, with incoming weights $W_{u\cdot}$ and bias $b_u$, the importance at training step $s$ is $\iota_u(s) = \operatorname{mean}_j (\partial\ell_s/\partial W_{uj})^2 + (\partial\ell_s/\partial b_u)^2$, computed from the minibatch gradient of the task loss $\ell_s$. The task estimate sums $\iota_u(s)$ over the steps of the task. The penalty anchors all incoming weights and the bias of unit $u$ with the same strength $\lambda\Omega_u$. The output layer is consolidated in the same way, but links and shrinkage act only on the two hidden layers.

\paragraph{EWC} The diagonal Fisher information is estimated from $n$ single-example gradients with labels sampled from the model's predictive distribution, $\hat{\omega}_i = \frac{1}{n}\sum_{k=1}^{n} \bigl(\partial \log p_\theta(\tilde{y}_k \mid x_k)/\partial\theta_i\bigr)^2$ with $\tilde{y}_k \sim p_\theta(\cdot \mid x_k)$, where the inputs $x_k$ are drawn from the training data of the task after training on it. A unit-level variant, EWC-unit, averages the same contributions over each unit's incoming weights and bias.

\paragraph{MAS} The importance is the mean output sensitivity, $\hat{\omega}_i = \frac{1}{n}\sum_{k=1}^{n} \bigl|\partial \lVert f_\theta(x_k)\rVert_2^2/\partial\theta_i\bigr|$, where $f_\theta$ returns the logits.

\paragraph{SI} The importance is the path integral of the loss decrease, normalized by the squared total change, $\hat{\omega}_i = \max\bigl(0, \sum_s -g_i(s)\,\Delta\theta_i(s)\bigr) / \bigl((\Delta\theta_i)^2 + \xi\bigr)$, where $g(s)$ is the gradient of the task loss before the penalty is added, $\Delta\theta(s)$ is the parameter change at step $s$, $\Delta\theta$ is the change over the whole task, and $\xi = 0.1$.

For EWC and MAS the sampling variance of each estimate is $v_i = s_i^2/n$, with $s_i^2$ the sample variance of the $n$ per-example contributions. For SI we compute the estimate as the mean of the per-step contributions $-g_i(s)\,\Delta\theta_i(s)$, clipped at zero and divided by $(\Delta\theta_i)^2+\xi$, which differs from the formula above only by a constant factor that the normalization in Eq.~(\ref{eq-accum}) removes. Its $v_i$ is the squared standard error of that mean divided by $\bigl((\Delta\theta_i)^2+\xi\bigr)^2$. The per-step contributions are autocorrelated, so $v_i$ is only approximate for SI.

\subsection{Shrinkage operators}
Let $G$ be a group of estimates, either a parameter tensor or the units of a layer for unit-level methods, with mean $\bar{m}_G$. We compare a fixed rule and an empirical Bayes (EB) rule,
\begin{align}
\tilde{m}_i &= (1-\beta)\, m_i + \beta\, \bar{m}_G, \label{eq-fixed}\\
\tilde{m}_i &= (1-B_i)\, m_i + B_i\, \bar{m}_G, \qquad B_i = \frac{v_i}{v_i + \hat{\tau}^2}, \qquad \hat{\tau}^2 = \max\bigl(0,\, \operatorname{Var}_G(m) - \operatorname{mean}_G(v)\bigr). \label{eq-eb}
\end{align}
The EB rule is the posterior mean under the normal model $m_i \sim \mathcal{N}(\mu_i, v_i)$ with $\mu_i \sim \mathcal{N}(\bar{m}_G, \tau^2)$, where $\tau^2$ is estimated by the method of moments. When $\hat{\tau}^2=0$, the rule shrinks every element with positive variance fully to the group mean, and we set $B_i=0$ when $v_i=\hat{\tau}^2=0$. Both rules keep importance non-negative. With $\beta=1$ the importance becomes uniform within each group, which turns the penalty into a plain L2 penalty per group. Study 2 uses $\beta\in\{0.1, 0.25, 0.5, 1\}$, Study 3 uses $\beta=0.25$, and Study 4 uses $\beta\in\{0.25, 0.5\}$.

\subsection{Transient links, and why random links are shrinkage}\label{sec-links}
In Studies 1 and 2 each hidden layer of $H$ units carries a set of transient links (Figure~\ref{fig-model}). At every training step, live links age by one step, links older than $L$ steps die, and $k$ new links form between currently unlinked pairs chosen by a rule. The random rule picks pairs uniformly. The four activity rules score pairs on the current minibatch, where a unit fires on an example when its ReLU output is positive. Co-activity is the fraction of examples on which both units fire, Jaccard similarity is the number of examples on which both fire divided by the number on which either fires, and the correlation and anti-correlation rules use the Pearson correlation of the two activations and its negative, never linking a unit that is silent on the whole minibatch. Each rule links the $k$ highest-scoring unlinked pairs, with ties broken arbitrarily. The importance of the current step then diffuses one step over the live links,
\begin{equation}\label{eq-diffuse}
\iota \leftarrow \iota - a_s \mathcal{L}_s \iota, \qquad a_s = \frac{\alpha}{1 + \max_u d_u(s)},
\end{equation}
where $\mathcal{L}_s = D_s - A_s$ is the Laplacian of the live-link graph at step $s$, $d_u(s)$ is the degree of unit $u$, and $\alpha=1$. The update conserves the total importance of the layer, so links only move protection between units, and the step size keeps it stable for any graph.

\begin{figure}[t]
\centering
\begin{tikzpicture}[x=1cm,y=1cm,
  unit/.style={circle,draw=inkgray,fill=white,line width=0.5pt,inner sep=0pt,minimum size=2.4mm},
  link/.style={draw=vizorange,line width=0.9pt}]
  \def\xs{0.48}
  \begin{scope}
    \node[anchor=west,font=\small] at (0,2.45) {(a) Importance tags};
    \foreach \h [count=\i from 0] in {0.35,1.6,0.55,0.9,2.1,0.3,1.15,0.6} {
      \fill[vizblue] ({0.25+\xs*\i-0.11},0.22) rectangle ++(0.22,{0.85*\h});
      \node[unit] at ({0.25+\xs*\i},0) {};
    }
    \node[font=\scriptsize,text=inkgray] at (1.93,-0.45) {hidden units of one layer};
  \end{scope}
  \begin{scope}[xshift=4.55cm]
    \node[anchor=west,font=\small] at (0,2.45) {(b) Transient random links};
    \foreach \h [count=\i from 0] in {0.35,1.6,0.55,0.9,2.1,0.3,1.15,0.6} {
      \fill[vizblue!35] ({0.25+\xs*\i-0.11},0.22) rectangle ++(0.22,{0.85*\h});
    }
    \foreach \a/\b/\d in {0/3/0.55, 1/6/0.85, 4/5/0.3, 2/7/1.05} {
      \draw[link] ({0.25+\xs*\a},-0.12) .. controls ({0.25+\xs*\a},{-0.12-\d}) and ({0.25+\xs*\b},{-0.12-\d}) .. ({0.25+\xs*\b},-0.12);
    }
    \foreach \i in {0,...,7} { \node[unit] at ({0.25+\xs*\i},0) {}; }
    \node[font=\scriptsize,text=inkgray,align=center] at (1.93,-1.42) {$k$ new links per step, each live $L$ steps};
  \end{scope}
  \begin{scope}[xshift=9.1cm]
    \node[anchor=west,font=\small] at (0,2.45) {(c) Equivalent shrinkage};
    \foreach \h [count=\i from 0] in {0.35,1.6,0.55,0.9,2.1,0.3,1.15,0.6} {
      \draw[inkgray!70,line width=0.4pt] ({0.25+\xs*\i-0.11},0.22) rectangle ++(0.22,{0.85*\h});
      \fill[vizblue] ({0.25+\xs*\i-0.11},0.22) rectangle ++(0.22,{0.85*(0.5*\h+0.5*0.94375)});
      \node[unit] at ({0.25+\xs*\i},0) {};
    }
    \draw[inkgray,dashed,line width=0.6pt] (0.02,{0.22+0.85*0.94375}) -- (3.84,{0.22+0.85*0.94375});
    \node[font=\scriptsize,text=inkgray,anchor=west] at (3.84,{0.22+0.85*0.94375}) {$\bar{\omega}$};
    \node[font=\scriptsize,text=inkgray] at (1.93,-0.45) {$\E[\tilde{\omega}_u] = (1-\beta)\,\omega_u + \beta\,\bar{\omega}$};
  \end{scope}
\end{tikzpicture}
\caption{The consolidation model of Studies 1 and 2. (a) Each hidden unit carries an importance tag, accumulated from squared gradients during a task, that sets how strongly its incoming weights are anchored afterward. (b) Transient links form between random pairs of units, and the importance of each step diffuses over the live links without changing its total. (c) In expectation, the diffusion pulls every tag toward the layer mean by a weight $\beta$ that depends on how many links are alive (Proposition~\ref{prop-links}). This is linear shrinkage, and it needs no graph. Outlines show the tags before shrinkage, drawn here for $\beta=0.5$.}
\Description{Three panels. A row of eight hidden units with bars of different heights above them representing importance tags. In the second panel, curved links connect random pairs of units. In the third panel, the bars are pulled toward a dashed line at the mean height.}
\label{fig-model}
\end{figure}

\begin{proposition}[Random links shrink importance toward the layer mean]\label{prop-links}
Suppose the live-link graph at step $s$ is independent of the importance values and its distribution is invariant under any relabeling of the units. Then for any $\iota$ the expected update in Eq.~(\ref{eq-diffuse}) is
\[
\E[\iota'_u] = (1-\beta_s)\,\iota_u + \beta_s\,\bar{\iota}, \qquad \beta_s = \frac{H}{H-1}\,\E\bigl[a_s\, d_u(s)\bigr],
\]
with the same $\beta_s$ for every unit, where $\bar{\iota}$ is the layer mean.
\end{proposition}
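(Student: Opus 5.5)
The plan is to write the update componentwise and take expectations using exchangeability. Let $A_s$ be the adjacency matrix of the live-link graph. Then
\[
\iota'_u = \iota_u - a_s\bigl(d_u(s)\,\iota_u - \textstyle\sum_{w\neq u} A_{uw}(s)\,\iota_w\bigr).
\]
Because the graph is independent of $\iota$, we can treat $\iota$ as fixed and take the expectation over the graph alone. This gives
\[
\E[\iota'_u] = \iota_u - \E[a_s d_u]\,\iota_u + \sum_{w\neq u}\E[a_s A_{uw}]\,\iota_w .
\]
Everything then reduces to computing the coefficients $\E[a_s A_{uw}]$.

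The key step uses relabeling invariance. The quantity $a_s = \alpha/(1+\max_v d_v)$ is invariant under any permutation of the units. Hence for a permutation $\pi$ fixing $u$ and sending $w$ to $w'$, the pair $(a_s, A_{uw})$ has the same law as $(a_s, A_{uw'})$. So $\E[a_s A_{uw}] =: c_u$ is the same for all $w\neq u$. Summing over $w\neq u$ and using $\sum_w A_{uw}=d_u$ gives $c_u = \E[a_s d_u]/(H-1)$. A permutation moving $u$ to any other unit shows that $\E[a_s d_u]$, and hence $c_u$, does not depend on $u$. Write $c$ for this common value.

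Next, substitute and rewrite the sum over other units via the layer mean, $\sum_{w\neq u}\iota_w = H\bar\iota - \iota_u$. This yields
\[
\E[\iota'_u] = \iota_u - (H-1)c\,\iota_u + c\,(H\bar\iota - \iota_u) = (1-Hc)\,\iota_u + Hc\,\bar\iota .
\]
Setting $\beta_s = Hc = \frac{H}{H-1}\E[a_s d_u]$ gives the claimed form, with the same $\beta_s$ for every unit.

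The main subtlety is that $a_s$ is random and correlated with the graph, so we cannot factor $\E[a_s A_{uw}]$ as $\E[a_s]\E[A_{uw}]$. The argument must therefore apply the symmetry to the joint law of $a_s$ and $A_s$. This works because $a_s$ is a permutation-invariant function of the graph. We also note that independence is used only to pull $\iota$ out of the expectation.
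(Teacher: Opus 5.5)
Your proposal is correct and follows essentially the same route as the paper's proof. Both arguments use that $a_s$ is a permutation-invariant function of the graph to get a common value $c_s=\E[a_s A_{uv}(s)]$ over pairs, deduce $\E[a_s d_u(s)]=(H-1)c_s$, and use independence to pull $\iota$ out before rewriting $\sum_{v\neq u}\iota_v=H\bar{\iota}-\iota_u$; your two-stage symmetry step (fixing $u$, then moving $u$) just spells out the paper's one-line appeal to invariance over all pairs.
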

\begin{proof}
The step size $a_s$ is a symmetric function of the graph, so by the invariance $a_s A_{uv}(s)$ has the same distribution for every pair $u \neq v$. Hence $\E[a_s A_{uv}(s)] = c_s$ for every pair, and $\E[a_s d_u(s)] = (H-1)c_s$. Because the graph is independent of $\iota$, $\E[\iota'_u] = \iota_u - (H-1)c_s\,\iota_u + c_s\sum_{v\neq u}\iota_v = (1-Hc_s)\,\iota_u + Hc_s\,\bar{\iota}$. Setting $\beta_s = Hc_s$ gives the result.
\end{proof}

Summed over the steps of a task, the expected task importance is approximately the graph-free shrinkage of Eq.~(\ref{eq-fixed}) applied to the summed importance, with $\beta$ the average of $\beta_s$ over the steps of the task. The approximation is exact when $\beta_s$ is constant within the task, which holds closely once the number of live links has reached its steady state, about $L$ steps after training starts. In steady state the link count is $kL$ and the mean degree is $\bar{d}=2kL/H$, so $\beta_s \approx \frac{H}{H-1}\,\bar{d}/(1+d_{\max})$ depends on the lifetime through the mean and the maximum degree. We estimated the effective weight $\beff$ for the six lifetimes of Study 1 by simulating the link process alone with the settings used in training, $H=256$, $k=8$, $\alpha=1$, and 1407 steps per task for 20 tasks (Appendix~\ref{app-sim}). The simulated $\beff$ is 0.23, 0.31, 0.52, 0.65, 0.75, and 0.83 for $L=25$, 50, 200, 500, 1000, and 2000. The simulation reproduces the link density logged during training to the printed precision and the logged coverage to within 0.001. Over one task the realized operator is close to linear shrinkage, with $R^2=0.997$ for $L=25$ and $0.982$ for $L=200$, falling to $0.91$ for $L=2000$, where the graph changes slowly. The activity rules violate both conditions of Proposition~\ref{prop-links}, because their links depend on activations and concentrate on particular units, so they need not shrink toward the layer mean.

\subsection{How heteroscedastic empirical Bayes allocates shrinkage for Fisher estimates}\label{sec-eb}
The EB rule of Eq.~(\ref{eq-eb}) shrinks each estimate by an amount that grows with its sampling variance. For Fisher-type estimates, that variance grows with the square of the importance itself.

\begin{proposition}[EB shrinks the largest Fisher estimates most]\label{prop-eb}
Let the per-example contributions to the EWC estimate for parameter $i$ be $c=g^2$, where $g$ is the derivative of the log-likelihood of a label sampled from the model, and let $\kappa_i$ be the kurtosis of $g$. Then $\E[c]=F_i$, the sampling variance of the mean of $n$ contributions is $v_i=(\kappa_i-1)F_i^2/n$, and for any fixed $\tau^2>0$ the EB weight $B_i=v_i/(v_i+\tau^2)$ increases with $F_i$ when $\kappa_i$ is constant.
\end{proposition}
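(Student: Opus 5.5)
The plan is a direct moment calculation in three steps, using only the definition of the EWC estimator with sampled labels. Write $g=\partial\log p_\theta(\tilde y\mid x)/\partial\theta_i$ with $x$ drawn from the task data and $\tilde y\sim p_\theta(\cdot\mid x)$.

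\emph{Step 1: the mean.} By the definition of the Fisher, $\E[c]=\E[g^2]=F_i$. It is useful to also record the score identity. Conditional on $x$,
\[
\E[g\mid x]=\sum_y p_\theta(y\mid x)\,\partial_i\log p_\theta(y\mid x)=\partial_i\sum_y p_\theta(y\mid x)=0,
\]
so $\E[g]=0$ and $F_i=\E[g^2]=\operatorname{Var}(g)$. This zero mean is what makes the kurtosis a clean description of the spread of $c$.

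\emph{Step 2: the sampling variance.} Take the kurtosis as the standardized fourth moment, $\kappa_i=\E[(g-\E g)^4]/\operatorname{Var}(g)^2$. By Step 1 this equals $\E[g^4]/F_i^2$, so $\E[g^4]=\kappa_iF_i^2$. Then
\[
\operatorname{Var}(c)=\E[g^4]-\bigl(\E[g^2]\bigr)^2=(\kappa_i-1)F_i^2 .
\]
The $n$ contributions come from independent draws of $(x_k,\tilde y_k)$, so they are i.i.d. The variance of their mean is therefore $v_i=(\kappa_i-1)F_i^2/n$. I would also note two side facts:
\begin{itemize}
\item $\kappa_i\ge1$ by Jensen's inequality, so $v_i\ge0$.
\item The plug-in $s_i^2/n$ used in the paper is unbiased for this $v_i$.
\end{itemize}

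\emph{Step 3: monotonicity of the weight.} Fix $\tau^2>0$, $n$, and $\kappa_i$. Write $B_i=1-\tau^2/(v_i+\tau^2)$. This is strictly increasing in $v_i\ge0$. The variance $v_i$ is in turn strictly increasing in $F_i>0$ whenever $\kappa_i>1$. Composing the two gives the claim. In the degenerate case $\kappa_i=1$, $g^2$ is almost surely constant, so $v_i=0$ and $B_i\equiv0$, which is still nondecreasing.

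\emph{Main obstacle.} The only real subtlety is interpretive, not computational. It is to pin down "kurtosis" as the non-excess fourth standardized moment, and to justify via the score identity that $g$ has mean zero. Without mean zero, $\E[g^4]$ would not reduce to $\kappa_iF_i^2$ and the clean formula would fail. I would therefore state the score identity explicitly rather than assume it. I would also remark that the result is a statement about the oracle $v_i$ with $\tau^2$ held fixed, not about the data-dependent $\hat\tau^2$.
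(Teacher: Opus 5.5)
Your proposal is correct and follows the paper's proof essentially step for step. The score identity gives $\E[g]=0$ and $F_i=\E[g^2]$; the non-excess kurtosis then gives $\operatorname{Var}(c)=(\kappa_i-1)F_i^2$ and, by independence, $v_i=(\kappa_i-1)F_i^2/n$; monotonicity of $B_i$ follows by composing two increasing maps, and your explicit score-identity derivation, degenerate case $\kappa_i=1$, and remark that the claim is for fixed $\tau^2$ rather than $\hat{\tau}^2$ are sensible refinements the paper leaves implicit.
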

\begin{proof}
The score has mean zero when labels are drawn from the model, so $\E[g]=0$ and $F_i=\E[g^2]$. Then $\operatorname{Var}(c)=\E[g^4]-(\E[g^2])^2=(\kappa_i-1)F_i^2$, and averaging $n$ independent contributions divides the variance by $n$. The weight $B_i$ increases with $v_i$, and $v_i$ increases with $F_i$.
\end{proof}

For Gaussian $g$, $\kappa_i=3$ and a single contribution has a coefficient of variation of $\sqrt{2}$, so even $n=200$ samples leave each estimate with a relative standard error of 10\%, and heavier-tailed gradients make it larger. Proposition~\ref{prop-eb} implies that EB pulls the most important parameters strongly toward the mean while leaving the least important ones nearly unchanged. That allocation would be right if the noise level of an estimate carried no information about its true value, which is what the normal model assumes. For Fisher estimates the noise level grows with the true value, so the model is misspecified exactly where it matters, and the rule moves protection away from the parameters with the largest estimated importance. A fixed weight instead shrinks every deviation by the same fraction, raising small values as well as lowering large ones. When noise is multiplicative, as here, a standard remedy is to shrink on the log scale. By the delta method, the noise there is approximately additive, with variance near $(\kappa-1)/n$ for every element when $n$ is large, the estimates are positive, and $\kappa$ does not vary across elements. Exponentiating the posterior mean of the log importance gives a power transform, $\tilde{m}_i = m_i^{1-B}\,\bar{m}_{\mathrm{geo}}^{\,B}$, with $\bar{m}_{\mathrm{geo}}$ the geometric mean of the group and a single weight $B$ per group that falls as $n$ grows. We did not test this variant and return to it in Section~\ref{sec-discussion}.

\section{Experimental Design}\label{sec-design}

\subsection{Benchmarks and models}
\paragraph{Permuted MNIST (P-MNIST)} Twenty tasks built from MNIST \citep{lecun1998gradient}, the first with the original pixel order and each of the others with a fixed random permutation of the pixels \citep{goodfellow2013empirical}. All tasks share one 10-way output layer, so the protocol is domain-incremental \citep{vandeven2022three}. The network is a multilayer perceptron with two hidden layers of 256 ReLU units. Studies 1 and 2 train on all 60,000 training images and report test accuracy over a grid of $\lambda$. Studies 3 and 4 hold out 10\% of the training images for validation and select $\lambda$ on validation accuracy only.

\paragraph{Split CIFAR-100 (Study 3)} Ten tasks of ten CIFAR-100 classes each in a fixed class order \citep{krizhevsky2009learning}, with a separate 10-way head per task, a task-incremental protocol. The heads are not regularized.

\paragraph{CIFAR-100 superclass (Study 4)} Five tasks. Task $k$ contains the $k$-th fine class of each of the 20 superclasses, and the label is the superclass, predicted by one shared 20-way head. Unlike the multi-head split, this domain-incremental protocol leaves substantial room for forgetting. All parameters are regularized.

Both CIFAR benchmarks use a convolutional network with four $3\times3$ convolutional layers of 32, 32, 64, and 64 channels with padding 1 and ReLU activations, $2\times2$ max pooling after the second and fourth, and a fully connected layer of 256 ReLU units, without batch normalization. Inputs are standardized per channel with training-set statistics, and we use no data augmentation. The CIFAR networks train for five epochs per task with batches of 64, and the P-MNIST networks for three epochs per task with batches of 128. Unless stated otherwise we use Adam with learning rate $10^{-3}$ \citep{kingma2015adam}, reset at the start of each task, and add the gradient of the penalty to the task gradient. With SGD (learning rate $\eta=0.01$, momentum $\mu=0.9$), an explicit penalty gradient becomes unstable once $2\eta\lambda\Omega_i$ exceeds $2(1+\mu)$, which large importance values can trigger. We therefore apply the penalty as an exact proximal step after each optimizer step, outside the momentum buffer, $\theta_i \leftarrow (\theta_i + c_i\theta^{(t-1)}_i)/(1+c_i)$ with $c_i=2\eta\lambda\Omega_i$ \citep{parikh2014proximal}.

\subsection{The four studies}
\paragraph{Study 1, mechanisms inspired by dendritic nanotubes (10 seeds)} Study 1 compares six arms on P-MNIST. Three have no consolidation, namely a plain MLP, a wider MLP with two hidden layers of 350 units that matches the parameter count of the lateral arm (401,110 against 400,394 parameters), and an MLP with a dense lateral weight matrix $M$ in each hidden layer. The lateral matrix is applied once after the feedforward activation as $h \leftarrow \mathrm{ReLU}(h + hM^{\top})$, with an L1 penalty of $10^{-5}$ on $M$ and magnitude pruning of entries below $2\times10^{-3}$ every 200 steps. The other three arms use unit-level consolidation, either alone (tag only) or with transient links formed at random or by co-activity, with $k=8$ new links per step. We used $\lambda\in\{0.003, 0.01, 0.03\}$ with lifetime $L=200$, and $L\in\{25, 50, 500, 1000, 2000\}$ at $\lambda=0.01$.

\paragraph{Study 2, links versus shrinkage (10 seeds)} The arms are tag only, graph-free shrinkage of the task importance of each hidden layer with $\beta\in\{0.1, 0.25, 0.5, 1\}$ (Eq.~\ref{eq-fixed}, one group per layer, applied before normalization), and transient links under the five rules of Section~\ref{sec-links} with $L\in\{25, 200\}$. All rules form the same number of links per step, so link counts are matched. We used $\lambda\in\{0.001, 0.003, 0.01\}$. We logged the coefficient of variation (CV) of the consolidated importance of each hidden layer at the end of training, averaged over the two hidden layers, together with the mean fraction of units with at least one live link (coverage), the fraction of units linked at least once per task (touched), and the Gini coefficient of link formation across units.

\paragraph{Study 3, shrinkage across estimators (5 seeds)} Study 3 has nine cells, five on P-MNIST (EWC, EWC-unit, SI, and MAS with Adam, and EWC with SGD) and four on Split CIFAR-100 (EWC, SI, and MAS with Adam, and EWC with SGD). EWC and MAS use $n=200$ samples. Each cell has three arms (no shrinkage, fixed $\beta=0.25$, and EB), with one group per parameter tensor (per layer of units for EWC-unit). For each cell, a pilot run with one seed, a shortened sequence of 5 P-MNIST or 4 CIFAR tasks, and no shrinkage selected $\lambda^{*}$ from $\{10^{-4}, 10^{-3}, \dots, 10^{2}\}$ by validation accuracy. Every arm then ran at $\lambda^{*}\times\{1/3, 1, 3\}$, and we report test metrics at the value with the best mean validation accuracy for that arm.

\paragraph{Study 4, variance versus bias (10 seeds)} Study 4 runs EWC and MAS on P-MNIST and on CIFAR-100 superclass with $n\in\{10, 50, 200, 1000\}$ single-example gradients per task, using nested sample sets within a seed, and four arms (no shrinkage, $\beta=0.25$, $\beta=0.5$, and EB). Every arm uses one $\lambda$ per benchmark and method, chosen by a pilot at $n=200$ without shrinkage on the full task sequence from $\{10^{-4}, \dots, 10^{3}\}$. Because $\lambda$ was tuned for the unshrunk arm at $n=200$, this choice favors that arm, at least at that sample size. The two mechanisms make opposite predictions. If shrinkage corrects variance, its gain should fall toward zero as $n$ grows. If it corrects a bias in the importance measure, the gain should persist at large $n$.

\subsection{Metrics and statistics}
Let $R_{t,j}$ be the test accuracy on task $j$ after training on task $t$. We report final average accuracy $\mathrm{ACC}=\frac{1}{T}\sum_j R_{T,j}$, backward transfer $\mathrm{BWT}=\frac{1}{T-1}\sum_{j<T}(R_{T,j}-R_{j,j})$ \citep{lopezpaz2017gradient}, and learning accuracy $\mathrm{LA}=\frac{1}{T}\sum_j R_{j,j}$, a measure of plasticity \citep{riemer2019learning}, all in percent. Within a seed, all arms see the same task permutations, data splits, and class order, so we compare arms by seed-paired differences. We report the mean difference, its standard error (SE), Cohen's $d_z$, a two-sided paired $t$-test, and an exact sign-flip permutation test, and we control the family-wise error with Holm's procedure \citep{holm1979simple}. Significance marks and Holm adjustments use the $t$-test. The sign-flip test serves as a check, and with five seeds its smallest attainable $p$-value is 0.0625. In Study 2 the families are, within each $\lambda$, (A) every shrinkage and link arm against tag only (14 comparisons), (B) each activity rule against random links of the same lifetime (8), and (C) each shrinkage weight against random links of each lifetime (8). Study 3 uses one family of all 27 comparisons, three per cell ($\beta=0.25$ against none, EB against none, and EB against $\beta=0.25$). In Study 4 we regress each seed's gain on $\log_{10} n$, test the mean slope with a one-sample $t$-test and the gain at $n=1000$ with a paired $t$-test, and apply Holm over the 16 primary tests (two fixed weights, slope and gain at $n=1000$, four benchmark and method pairs). Descriptive tables give the mean $\pm$ standard deviation over seeds, computed with the number of seeds as divisor. A non-significant difference is not evidence of equivalence, so where we claim that two arms agree we give a confidence interval.

\subsection{Implementation, compute, and use of AI tools}
All experiments were implemented in PyTorch \citep{paszke2019pytorch} and run on one Apple M5 MacBook Pro with the MPS backend. Configurations shared between the scripts of Studies 1 and 2 reproduce each other exactly. For example, the tag-only arm at $\lambda=0.003$ and the random-link arm with $L=25$ at $\lambda=0.01$ give identical results in both studies. The experiment code, the statistical summaries, the figure scripts, and a first draft of this manuscript were prepared with the assistance of Claude (Anthropic), a large language model, under the author's direction. The author ran all experiments and is responsible for the content. Code and logs will be released with the paper.

\section{Results}\label{sec-results}

\subsection{Study 1, mechanisms inspired by dendritic nanotubes}

\begin{table}[t]
\caption{Study 1 on 20-task P-MNIST (10 seeds). ACC is the mean $\pm$ standard deviation over seeds, and BWT and LA are means, all in percent. The lower panel varies the link lifetime at $\lambda=0.01$, with $\beff$ the shrinkage weight implied by random links (Section~\ref{sec-links}) and density the fraction of possible links that are alive.}
\label{tab-study1}
\small
\begin{tabular}{llccc}
\toprule
Arm & $\lambda$ & ACC & BWT & LA \\
\midrule
MLP (269,322 parameters) & -- & 32.58 $\pm$ 1.42 & $-67.97$ & 97.15 \\
Wider MLP (401,110 parameters) & -- & 35.06 $\pm$ 2.31 & $-65.43$ & 97.21 \\
Lateral links with pruning (400,394 parameters) & -- & 32.34 $\pm$ 1.90 & $-68.20$ & 97.13 \\
\midrule
Tag only & 0.003 & 79.48 $\pm$ 1.42 & $-15.15$ & 93.87 \\
\quad + co-activity links, $L=200$ & 0.003 & 79.94 $\pm$ 1.41 & $-14.65$ & 93.86 \\
\quad + random links, $L=200$ & 0.003 & 82.05 $\pm$ 1.35 & $-12.20$ & 93.64 \\
Tag only & 0.01 & 78.48 $\pm$ 1.42 & $-12.52$ & 90.37 \\
\quad + co-activity links, $L=200$ & 0.01 & 78.32 $\pm$ 1.00 & $-12.52$ & 90.22 \\
\quad + random links, $L=200$ & 0.01 & 80.52 $\pm$ 0.97 & $-9.72$ & 89.75 \\
Tag only & 0.03 & 73.02 $\pm$ 1.37 & $-12.62$ & 85.00 \\
\quad + co-activity links, $L=200$ & 0.03 & 73.17 $\pm$ 1.16 & $-12.20$ & 84.77 \\
\quad + random links, $L=200$ & 0.03 & 75.37 $\pm$ 0.93 & $-8.47$ & 83.41 \\
\bottomrule
\end{tabular}

\vspace{0.8em}
\begin{tabular}{rcccc}
\toprule
Lifetime $L$ & $\beff$ & Random links, ACC & Co-activity links, ACC & Link density \\
\midrule
25 & 0.23 & 81.03 $\pm$ 1.12 & 79.06 $\pm$ 0.88 & 0.6\% \\
50 & 0.31 & 80.90 $\pm$ 0.90 & 78.59 $\pm$ 1.62 & 1.2\% \\
200 & 0.52 & 80.52 $\pm$ 0.97 & 78.32 $\pm$ 1.00 & 4.9\% \\
500 & 0.65 & 80.09 $\pm$ 0.83 & 78.52 $\pm$ 0.89 & 12.1\% \\
1000 & 0.75 & 79.92 $\pm$ 0.86 & 78.00 $\pm$ 1.27 & 24.1\% \\
2000 & 0.83 & 79.48 $\pm$ 0.81 & 77.68 $\pm$ 0.92 & 47.3\% \\
\bottomrule
\end{tabular}
\end{table}

The original idea, dense lateral connections pruned during training, did not help (Table~\ref{tab-study1}). After 20 tasks, pruning had removed all but 1.1\% of the lateral weights, and the network matched the plain MLP (ACC 32.3\% against 32.6\%), while the parameter-matched wider MLP did slightly better (35.1\%). The surviving density depends on the L1 penalty, the pruning threshold, and the length of training, since a 3-task pilot kept 37\%, so it says nothing about how many connections biology uses. Without consolidation all three networks forgot almost everything, with BWT between $-65$ and $-68$ points.

Consolidation dominated everything else, lifting ACC from about 33\% to between 73\% and 80\%. On top of it, links between co-active units changed ACC by less than half a point at every $\lambda$ with $L=200$, and by at most 0.8 points at any lifetime. Random links improved ACC by 2.6, 2.0, and 2.4 points at $\lambda=0.003$, 0.01, and 0.03, through better retention (BWT improved by 2.8 to 4.2 points) at a small cost in plasticity (LA fell by 0.2 to 1.6 points). In the lifetime sweep, random links did best when short-lived and degraded steadily as their lifetime grew, from 81.0\% at $L=25$ to 79.5\% at $L=2000$. The biologically motivated rule never helped, and the control that was meant to be neutral did.

\subsection{Study 2, random links act as shrinkage}

\begin{figure}[t]
\centering
\includegraphics[width=\linewidth]{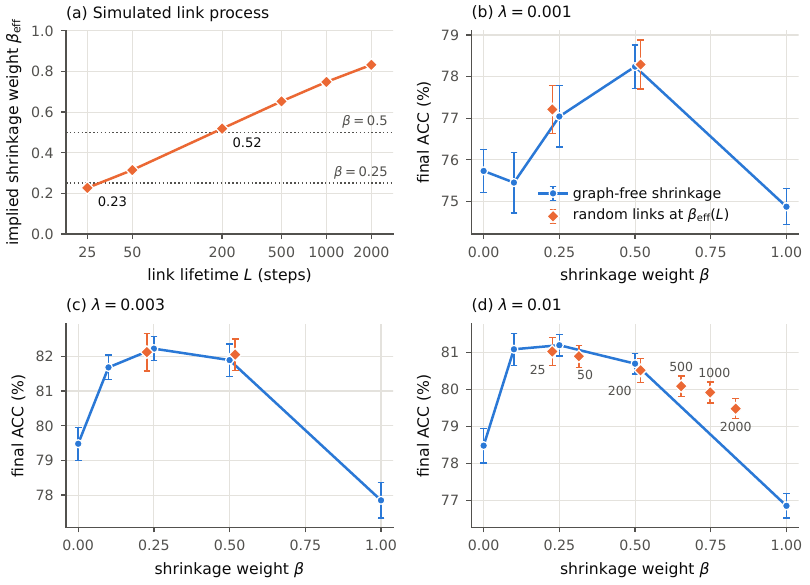}
\caption{Random links behave as graph-free shrinkage at the weight implied by their lifetime (P-MNIST, 20 tasks, 10 seeds). (a) Shrinkage weight implied by the link process, from a simulation of the process alone (Appendix~\ref{app-sim}). (b) to (d) Final accuracy against the shrinkage weight for graph-free shrinkage of hidden-layer importance (blue, where $\beta=0$ is tag only) and for random links placed at their simulated weight (orange), at three penalty strengths $\lambda$. In (d) the labels give the link lifetime, and lifetimes 50, 500, 1000, and 2000 come from Study 1. Error bars show $\pm1$ standard error over seeds. Lines connect measured weights only, so the blue curve was not sampled between 0.5 and 1.}
\Description{Four panels. Panel a shows the implied shrinkage weight rising from 0.23 at lifetime 25 to 0.83 at lifetime 2000. Panels b to d plot final accuracy against shrinkage weight for three penalty strengths. The graph-free shrinkage curve peaks between beta 0.25 and 0.5 and falls at beta 1. Random-link points sit on the curve at weights 0.23 and 0.52. In panel d, longer lifetimes sit at larger weights and lower accuracy.}
\label{fig-links}
\end{figure}

\begin{table}[t]
\caption{Study 2 at $\lambda=0.003$ (P-MNIST, 10 seeds). ACC is the mean $\pm$ standard deviation in percent. Differences are seed-paired, in points, with the standard error in parentheses. Graph-free arms are compared with random links of the matched lifetime ($\beta=0.25$ with $L=25$, $\beta=0.5$ with $L=200$), and link rules with random links of the same lifetime. CV is the coefficient of variation of the consolidated importance. \sig{} marks Holm-adjusted $p<0.05$ within the family. Appendix Table~\ref{tab-study2-all} gives the other two values of $\lambda$.}
\label{tab-study2}
\small
\setlength{\tabcolsep}{3.5pt}
\begin{tabular}{lcrrcccc}
\toprule
Arm & ACC & $\Delta$ tag only & $\Delta$ random & CV & Coverage & Touched & Gini \\
\midrule
Tag only & 79.48 $\pm$ 1.42 & -- & -- & 0.543 & -- & -- & -- \\
Shrinkage, $\beta=0.1$ & 81.68 $\pm$ 1.07 & $+2.20$ (0.44)\sig & -- & 0.450 & -- & -- & -- \\
Shrinkage, $\beta=0.25$ & 82.22 $\pm$ 1.03 & $+2.74$ (0.43)\sig & $+0.10$ (0.30) & 0.407 & -- & -- & -- \\
Shrinkage, $\beta=0.5$ & 81.89 $\pm$ 1.40 & $+2.41$ (0.50)\sig & $-0.17$ (0.21) & 0.320 & -- & -- & -- \\
Shrinkage, $\beta=1$ & 77.85 $\pm$ 1.53 & $-1.63$ (0.61) & -- & 0.000 & -- & -- & -- \\
\midrule
Random, $L=25$ & 82.12 $\pm$ 1.63 & $+2.64$ (0.41)\sig & -- & 0.413 & 0.792 & 1.000 & 0.013 \\
Random, $L=200$ & 82.05 $\pm$ 1.35 & $+2.58$ (0.43)\sig & -- & 0.313 & 0.999 & 1.000 & 0.013 \\
Co-activity, $L=25$ & 79.73 $\pm$ 1.60 & $+0.25$ (0.32) & $-2.38$ (0.33)\sig & 0.497 & 0.159 & 0.493 & 0.659 \\
Co-activity, $L=200$ & 79.94 $\pm$ 1.41 & $+0.46$ (0.37) & $-2.11$ (0.46)\sig & 0.450 & 0.453 & 0.721 & 0.502 \\
Jaccard, $L=25$ & 79.90 $\pm$ 0.72 & $+0.42$ (0.39) & $-2.22$ (0.50)\sig & 0.497 & 0.214 & 0.699 & 0.625 \\
Jaccard, $L=200$ & 80.42 $\pm$ 1.20 & $+0.94$ (0.33) & $-1.63$ (0.36)\sig & 0.450 & 0.638 & 0.909 & 0.456 \\
Correlation, $L=25$ & 80.19 $\pm$ 1.83 & $+0.71$ (0.41) & $-1.93$ (0.45)\sig & 0.522 & 0.555 & 0.959 & 0.344 \\
Correlation, $L=200$ & 80.59 $\pm$ 1.40 & $+1.12$ (0.40) & $-1.46$ (0.41)\sig & 0.445 & 0.962 & 0.986 & 0.201 \\
Anti-correlation, $L=25$ & 80.18 $\pm$ 0.99 & $+0.70$ (0.28) & $-1.94$ (0.36)\sig & 0.518 & 0.441 & 0.860 & 0.482 \\
Anti-correlation, $L=200$ & 79.55 $\pm$ 1.72 & $+0.07$ (0.49) & $-2.50$ (0.33)\sig & 0.453 & 0.871 & 0.968 & 0.302 \\
\bottomrule
\end{tabular}
\end{table}

Figure~\ref{fig-links} and Table~\ref{tab-study2} report Study 2. Four observations point to shrinkage as the mechanism.

First, graph-free shrinkage reproduces the gain of random links. At $\lambda=0.003$, shrinkage with $\beta=0.25$ improved ACC over tag only by $2.74\pm0.43$ points (mean $\pm$ SE, Holm $p=0.002$), and random links with $L=25$ improved it by $2.64\pm0.41$ (Holm $p=0.002$). Shrinkage with $\beta=0.5$ improved it by $2.41\pm0.50$, and links with $L=200$ by $2.58\pm0.43$. Paired directly, the matched arms differ by at most 0.18 points at any $\lambda$. The 95\% confidence intervals of these differences lie within $\pm0.8$ points at $\lambda=0.003$ and $0.01$, a third or less of the gain over tag only, and within $\pm1.3$ points at $\lambda=0.001$, where the paired differences were noisier.

Second, the pairing is predicted rather than fitted. The simulation of the link process alone gives $\beff=0.23$ for $L=25$ and $0.52$ for $L=200$ (Figure~\ref{fig-links}a), close to the graph-free weights that match each lifetime. The consolidated importance agrees. At $\lambda=0.001$, 0.003, and 0.01 its CV was 0.394, 0.413, and 0.454 for links with $L=25$ against 0.389, 0.407, and 0.444 for $\beta=0.25$, and 0.295, 0.313, and 0.333 for links with $L=200$ against 0.302, 0.320, and 0.342 for $\beta=0.5$. In the lifetime sweep of Study 1 (Figure~\ref{fig-links}d), lifetimes of 25 and 200 steps land within 0.2 points of the matched shrinkage arms. Longer lifetimes imply larger weights and gave lower accuracy, as shrinkage predicts. They lie 0.6 to 1.3 points above a straight line between the measured arms at $\beta=0.5$ and $\beta=1$, but we did not sample the shrinkage curve in that range, so the sweep cannot test the match there.

Third, partial shrinkage beats both extremes. Full shrinkage ($\beta=1$), which anchors every unit of a hidden layer equally, was worse than tag only at every $\lambda$, by 0.9 to 1.6 points, and worse than random links by 2.3 to 4.3 points (Holm $p\le0.001$ in all six comparisons). Per-unit importance carries real information, and partial shrinkage keeps it while damping its extremes.

Fourth, every activity rule did worse than random links. All 24 comparisons with random links of the same lifetime and $\lambda$ were negative, by 0.4 to 2.7 points, and 21 were Holm-significant. The diagnostics explain part of the gap. Raw co-firing concentrated links on a few units. At $L=25$ it touched only 49\% of the units in a task, with a Gini coefficient of 0.66 against 100\% and 0.01 for random links, and it shrank less (CV 0.497 against 0.413 at $\lambda=0.003$). Correlation links covered almost every unit at $L=200$ (coverage 0.96) and still trailed random links by 1.5 points. At matched dispersion they also trailed graph-free shrinkage, for example 80.6\% for correlation links with CV 0.445 against 81.7\% for $\beta=0.1$ with CV 0.450. For the rules that link similar units, one explanation is that averaging importance among units that already behave alike pulls each tag toward a local mean rather than the layer mean. Anti-correlation links also trailed random links, by 1.8 to 2.6 points, so the more general reading is that every activity rule concentrates links on a structured subset of pairs and therefore does not implement the uniform shrinkage of Proposition~\ref{prop-links}.

\subsection{Study 3, fixed shrinkage across estimators}

\begin{table}[t]
\caption{Study 3 (5 seeds). Final accuracy in percent, mean $\pm$ standard deviation, at each arm's validation-selected $\lambda$, with seed-paired differences from no shrinkage in points (SE in parentheses). $\Delta$LA is the change in learning accuracy under $\beta=0.25$, and $\bar{B}$ is the mean EB weight. \unadj{} marks unadjusted $p<0.05$. No difference survives Holm correction over the 27 comparisons. Appendix Table~\ref{tab-lambda} lists the selected $\lambda$ values.}
\label{tab-study3}
\small
\setlength{\tabcolsep}{5pt}
\begin{tabular}{llcccrrrc}
\toprule
Method & Opt. & None & $\beta=0.25$ & EB & $\Delta$, $\beta=0.25$ & $\Delta$LA & $\Delta$, EB & $\bar{B}$ \\
\midrule
\multicolumn{9}{l}{\emph{P-MNIST}} \\
EWC & Adam & 82.1 $\pm$ 1.2 & 82.6 $\pm$ 0.5 & 81.5 $\pm$ 0.5 & $+0.47$ (0.65) & $-1.11$ & $-0.61$ (0.83) & 0.25 \\
EWC-unit & Adam & 80.8 $\pm$ 1.0 & 80.6 $\pm$ 1.4 & 81.6 $\pm$ 0.8 & $-0.20$ (0.43) & $-0.09$ & $+0.86$ (0.28)\unadj & 0.37 \\
SI & Adam & 82.3 $\pm$ 1.2 & 83.5 $\pm$ 1.2 & 81.9 $\pm$ 0.8 & $+1.24$ (0.30)\unadj & $-0.23$ & $-0.41$ (0.36) & 0.13 \\
MAS & Adam & 81.7 $\pm$ 0.5 & 83.6 $\pm$ 0.7 & 82.0 $\pm$ 0.8 & $+1.92$ (0.34)\unadj & $-0.11$ & $+0.29$ (0.53) & 0.03 \\
EWC & SGD & 82.8 $\pm$ 0.9 & 82.5 $\pm$ 1.0 & 82.6 $\pm$ 0.7 & $-0.29$ (0.55) & $+2.48$ & $-0.21$ (0.37) & 0.29 \\
\midrule
\multicolumn{9}{l}{\emph{Split CIFAR-100}} \\
EWC & Adam & 57.1 $\pm$ 1.2 & 54.8 $\pm$ 0.7 & 57.2 $\pm$ 1.1 & $-2.31$ (0.57)\unadj & $-3.53$ & $+0.08$ (0.15) & 0.05 \\
SI & Adam & 55.0 $\pm$ 0.9 & 54.2 $\pm$ 0.5 & 55.0 $\pm$ 0.9 & $-0.79$ (0.36) & $-1.51$ & $+0.01$ (0.06) & 0.19 \\
MAS & Adam & 56.5 $\pm$ 1.0 & 54.2 $\pm$ 0.4 & 56.6 $\pm$ 1.0 & $-2.31$ (0.49)\unadj & $-3.32$ & $+0.08$ (0.47) & 0.03 \\
EWC & SGD & 59.6 $\pm$ 1.1 & 57.9 $\pm$ 1.0 & 59.2 $\pm$ 1.0 & $-1.65$ (0.24)\unadj & $-1.77$ & $-0.35$ (0.16) & 0.07 \\
\bottomrule
\end{tabular}
\end{table}

On P-MNIST, $\beta=0.25$ improved SI by $1.24\pm0.30$ points and MAS by $1.92\pm0.34$ (Table~\ref{tab-study3}; $d_z=1.86$ and $2.49$, unadjusted $p=0.014$ and $0.005$). The gains came almost entirely from retention, with BWT improving by 1.5 and 2.1 points while learning accuracy barely moved ($-0.2$ and $-0.1$; Appendix Table~\ref{tab-study3-bwt}). The changes for EWC ($+0.5$), EWC-unit ($-0.2$), and EWC with SGD ($-0.3$) were within noise. On Split CIFAR-100, $\beta=0.25$ reduced accuracy in all four cells, by 0.8 to 2.3 points. The losses came from plasticity, with LA falling by 1.5 to 3.5 points, while retention improved slightly (BWT up by 0.1 to 1.4 points). The mean EB weights ranged from 0.03 to 0.37, and EB changed accuracy by $-0.6$ to $+0.9$ points, reaching unadjusted significance only for EWC-unit. On P-MNIST it trailed fixed shrinkage for SI and MAS by 1.6 points each (unadjusted $p=0.004$).

No comparison in Study 3 survives Holm correction over all 27 tests, and the smallest adjusted $p$ is 0.06. With five seeds, Study 3 is a screen, and it motivated Study 4, which tests the MAS result again with ten seeds. Twenty of the 27 selected values of $\lambda$ also sat at an edge of the three-point bracket, so the bracket was too narrow to locate every optimum (Appendix Table~\ref{tab-lambda}).

\subsection{Study 4, variance for EWC and bias for MAS}

\begin{figure}[t]
\centering
\includegraphics[width=\linewidth]{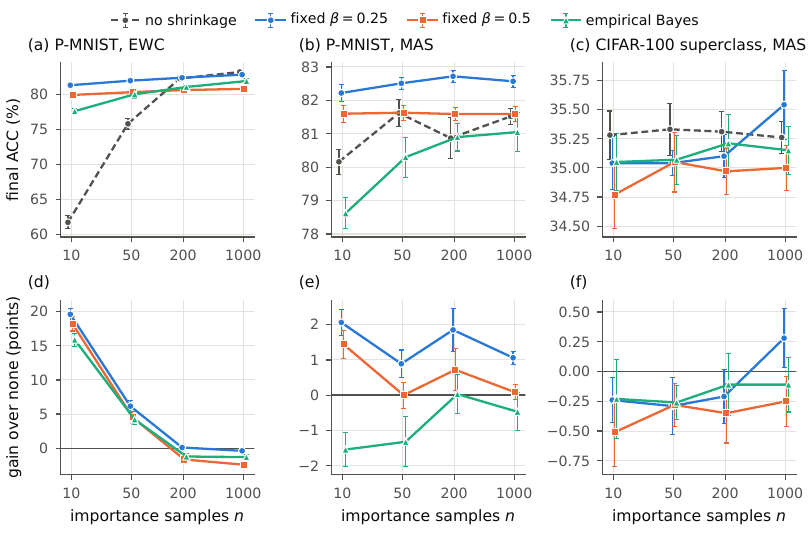}
\caption{Varying the number of importance samples $n$ separates variance from bias (Study 4, 10 seeds). (a) to (c) Final accuracy of each arm, with error bars of $\pm1$ standard error over seeds. (d) to (f) Seed-paired gain over no shrinkage, mean $\pm1$ standard error. For EWC on P-MNIST the gain vanishes as $n$ grows, the signature of variance. For MAS on P-MNIST a fixed weight of 0.25 helps at every $n$, the signature of bias. Points are offset horizontally for legibility. The CIFAR-100 superclass EWC cells are pending a rerun.}
\Description{Six panels in two rows. Top row shows final accuracy against the number of importance samples for EWC on P-MNIST, MAS on P-MNIST, and MAS on CIFAR-100 superclass, for no shrinkage, fixed beta 0.25, fixed beta 0.5, and empirical Bayes. For EWC, the unshrunk arm rises from 62 to 83 percent while the shrunk arms stay between 77 and 83 percent. Bottom row shows the gain over no shrinkage. For EWC the gain falls from between 16 and 20 points at 10 samples to between minus 2.4 and minus 0.4 points at 1000 samples. For MAS on P-MNIST, beta 0.25 stays positive at every n. For CIFAR-100 all gains are near zero.}
\label{fig-nsweep}
\end{figure}

\begin{table}[t]
\caption{Study 4 (10 seeds). The upper panel gives the seed-paired gain in final accuracy over no shrinkage, in points (SE in parentheses), and the seed-level slope of the gain against $\log_{10} n$, in points per decade. \sig{} marks Holm-adjusted $p<0.05$ over the 16 primary tests, which are the slopes and the gains at $n=1000$ for the two fixed weights; EB rows are descriptive. The lower panel gives the accuracy of the unshrunk arm with its slope against $\log_{10} n$, the coefficient of variation (CV) of its raw importance estimates averaged over tensors and tasks, and the mean weight $\bar{B}$ assigned by the EB arm.}
\label{tab-study4}
\small
\setlength{\tabcolsep}{5pt}
\begin{tabular}{lrrrrr}
\toprule
 & $n=10$ & $n=50$ & $n=200$ & $n=1000$ & Slope per decade \\
\midrule
\multicolumn{6}{l}{\emph{P-MNIST, EWC}} \\
\quad $\beta=0.25$ & $+19.60$ (0.87) & $+6.16$ (0.87) & $+0.12$ (0.37) & $-0.37$ (0.28) & $-9.99$ (0.42)\sig \\
\quad $\beta=0.5$ & $+18.20$ (1.04) & $+4.52$ (0.90) & $-1.64$ (0.37) & $-2.41$ (0.30)\sig & $-10.30$ (0.46)\sig \\
\quad EB & $+15.89$ (0.96) & $+4.22$ (0.80) & $-1.20$ (0.39) & $-1.29$ (0.40) & $-8.62$ (0.62) \\
\multicolumn{6}{l}{\emph{P-MNIST, MAS}} \\
\quad $\beta=0.25$ & $+2.06$ (0.36) & $+0.89$ (0.39) & $+1.85$ (0.61) & $+1.06$ (0.18)\sig & $-0.33$ (0.17) \\
\quad $\beta=0.5$ & $+1.44$ (0.39) & $+0.00$ (0.37) & $+0.72$ (0.59) & $+0.08$ (0.21) & $-0.53$ (0.14)\sig \\
\quad EB & $-1.54$ (0.47) & $-1.32$ (0.71) & $+0.03$ (0.55) & $-0.47$ (0.54) & $+0.68$ (0.38) \\
\multicolumn{6}{l}{\emph{CIFAR-100 superclass, MAS}} \\
\quad $\beta=0.25$ & $-0.24$ (0.19) & $-0.29$ (0.24) & $-0.21$ (0.23) & $+0.28$ (0.25) & $+0.25$ (0.16) \\
\quad $\beta=0.5$ & $-0.51$ (0.29) & $-0.28$ (0.18) & $-0.35$ (0.25) & $-0.25$ (0.21) & $+0.11$ (0.17) \\
\quad EB & $-0.23$ (0.33) & $-0.26$ (0.14) & $-0.11$ (0.26) & $-0.11$ (0.23) & $+0.08$ (0.24) \\
\multicolumn{6}{l}{\emph{CIFAR-100 superclass, EWC}\quad \pending{pending rerun}} \\
\midrule
\multicolumn{6}{l}{\emph{P-MNIST, EWC}} \\
\quad ACC, none & 61.7 & 75.8 & 82.2 & 83.2 & $+10.74$ (0.45) \\
\quad CV, none & 4.47 & 2.81 & 1.70 & 1.41 &  \\
\quad Mean EB weight & 0.49 & 0.30 & 0.22 & 0.10 &  \\
\multicolumn{6}{l}{\emph{P-MNIST, MAS}} \\
\quad ACC, none & 80.2 & 81.6 & 80.9 & 81.5 & $+0.52$ (0.17) \\
\quad CV, none & 1.00 & 0.84 & 0.79 & 0.78 &  \\
\quad Mean EB weight & 0.24 & 0.08 & 0.03 & 0.006 &  \\
\multicolumn{6}{l}{\emph{CIFAR-100 superclass, MAS}} \\
\quad ACC, none & 35.3 & 35.3 & 35.3 & 35.3 & $-0.01$ (0.12) \\
\quad CV, none & 1.17 & 0.95 & 0.90 & 0.89 &  \\
\quad Mean EB weight & 0.24 & 0.09 & 0.03 & 0.007 &  \\
\bottomrule
\end{tabular}
\end{table}

For EWC on P-MNIST the number of samples mattered enormously (Figure~\ref{fig-nsweep}a and d, Table~\ref{tab-study4}). Without shrinkage, ACC rose from 61.7\% with $n=10$ to 83.2\% with $n=1000$, by $10.7\pm0.5$ points per tenfold increase in $n$. The gain from $\beta=0.25$ fell in step, from 19.6 points at $n=10$ to 6.2, 0.1, and $-0.4$ points at $n=50$, 200, and 1000, a slope of $-10.0\pm0.4$ points per decade (Holm $p<10^{-4}$). With $\beta=0.5$ the gain became a significant loss of 2.4 points at $n=1000$. The raw importance estimates tell the same story. Their CV fell from 4.47 to 1.41 as $n$ grew, and the EB weight fell from 0.49 to 0.10. At $n=10$, EB recovered 15.9 of the 19.6 points that $\beta=0.25$ gained, but it lost 1.2 to 1.3 points at $n\ge200$. This is the signature of variance. Shrinkage substitutes for samples, and once the samples suffice it removes signal.

For MAS on P-MNIST, more samples helped the unshrunk arm only slightly ($0.5\pm0.2$ points per decade), and $\beta=0.25$ helped at every $n$, by 2.1, 0.9, 1.9, and 1.1 points. Its gain had a slope of $-0.33\pm0.17$ points per decade (95\% CI $-0.71$ to $0.05$, unadjusted $p=0.08$), and its gain of $1.06\pm0.18$ points at $n=1000$ was Holm-significant. With $\beta=0.5$ the gain was smaller and declined with $n$ (slope $-0.5\pm0.1$ points per decade, Holm-adjusted $p<0.05$), so part of the benefit at small $n$ is a variance effect. The raw importance of MAS was far less dispersed than that of EWC, with a CV of 1.00 at $n=10$ and 0.78 at $n=1000$, and the EB weight fell from 0.24 at $n=10$ to 0.006 at $n=1000$. EB did not help, and it hurt at small $n$. At $n=10$ its mean weight matched the fixed weight of 0.25, yet EB lost 1.5 points where $\beta=0.25$ gained 2.1, so where shrinkage falls matters as much as how much there is. The gain of $\beta=0.25$ at $n=1000$ shows a component that more samples do not remove. On this benchmark, the converged MAS importance protects old tasks less well than a tempered version of itself, which is what we mean by bias.

On CIFAR-100 superclass, MAS showed no detectable effect of shrinkage at any $n$, with all gains within $\pm0.6$ points and a slope of $0.25\pm0.16$ points per decade, and no detectable effect of $n$ ($-0.01\pm0.12$ points per decade). Every arm forgot heavily, with BWT near $-31$ points, and reached about 35\% accuracy, which suggests that the selected penalty was weak on this benchmark. A $\lambda=0$ reference, part of the rerun described next, will show whether regularization helps there at all.

The EWC cells of this benchmark are pending. In the first run, the importance accumulator fixed its numerical unit from the first sample. When that sample had a near-zero gradient, the unit became very large and later sums overflowed single-precision arithmetic, which produced non-finite dispersion values and EB weights in these cells. We replaced it with an accumulator that rescales as larger contributions arrive and are rerunning the cells \pending{results to be added}. In every other cell of Studies 3 and 4, which used the earlier accumulator, all EB weights were finite and no run diverged, so the overflow did not affect those results, and the unshrunk and fixed-weight arms do not use the variance estimates at all. The P-MNIST results and the CIFAR-100 MAS results do not depend on the pending cells. We also checked that the Holm-adjusted significance of every result marked in Table~\ref{tab-study4} holds whatever values the four pending tests take, the closest case being the slope for MAS with $\beta=0.5$ at 0.047.

\section{Discussion}\label{sec-discussion}

\subsection{Two reasons to shrink}
The four studies point to one simple intervention with two distinct justifications. When importance comes from few samples, as with EWC at $n=10$ or 50, the estimates are noisy and heavy-tailed, the penalty concentrates on the parameters that happened to receive large values, and shrinkage spreads protection back out. Its benefit then disappears as samples accumulate, and too much shrinkage hurts. When the importance measure is precise but miscalibrated for retention, as MAS appears to be on P-MNIST, a small fixed weight helps at every sample size. Fixed shrinkage mixes the importance-weighted penalty with a uniform one within each group, $\lambda\sum_i[(1-\beta)\Omega_i + \beta\bar{\Omega}_{G(i)}](\theta_i-\theta^{*}_i)^2$, where $\bar{\Omega}_{G(i)}$ is the mean importance of the group that contains parameter $i$. It therefore adds an L2 floor that limits the cumulative drift of parameters that every task rates as unimportant. That reading fits the finding that MAS approximates the square root of the Fisher information \citep{benzing2022unifying}, a profile that is already flattened relative to EWC, but the floor is a hypothesis we did not test directly.

The distinction suggests a practical check. Compare the unshrunk method at two sample sizes. If it improves with more samples, the problem is variance, and the remedy is more samples or shrinkage that adapts to $n$. If it does not, a fixed shrinkage weight chosen on validation data is worth trying as a second hyperparameter beside $\lambda$.

\subsection{Why the empirical Bayes rule did not deliver}
EB moved in the right direction for the variance effect, shrinking hard when $n$ was small and little when it was large, yet it lost to a fixed weight at small $n$ and to no shrinkage at large $n$. Proposition~\ref{prop-eb} suggests one reason. For Fisher-type estimates the sampling noise is multiplicative, so the heteroscedastic rule assigns its largest weights to the largest estimates. It pulls the most important parameters toward the mean and leaves the rest almost untouched, which removes protection from the parameters with the largest estimated importance. A log-scale rule would shrink every estimate by the same factor in log space, a power transform of the importance whose exponent approaches one as $n$ grows. We expect such a rule to keep the adaptivity of EB to $n$ without concentrating shrinkage on the largest estimates, and testing it is the obvious next experiment. For MAS, where much of the benefit at large $n$ does not come from noise, a variance-driven rule cannot recover it, and the EB weights of Study 4 were indeed near zero for $n\ge50$. At $n=10$, where EB shrank as much on average as the fixed weight and still lost accuracy, the allocation again appears to be the problem.

\subsection{Shrinkage trades plasticity for stability}
Because every task has a fixed protection budget, shrinkage moves protection from high-importance to low-importance parameters, and the latter are the ones new tasks would otherwise use freely. Shrinkage lowered learning accuracy in most cells of Studies 3 and 4, a form of the intransigence that consolidation penalties are known to cause \citep{chaudhry2018riemannian}. It therefore pays when forgetting is the bottleneck, as on P-MNIST with a shared head, where BWT without shrinkage ranged from $-8$ to $-13$ points in Study 3, and it costs accuracy when forgetting is not, as on the multi-head Split CIFAR-100, where BWT ranged from $-0.9$ to $-1.6$ points. The CIFAR-100 superclass benchmark has plenty of forgetting but showed no detectable effect for MAS, possibly because the selected penalty was weak there.

\subsection{What remains of the nanotube idea}
Our results give no support to the hypothesis that activity-dependent exchange of plasticity signals improves consolidation. The biologically motivated rules never beat random links, and random links appear to help because, in expectation, they implement a statistical correction that needs no graph at all. The results do establish a weaker fact that may still be useful. Any mass-conserving exchange of plasticity signals between randomly paired units acts as shrinkage of noisy importance, with a strength set by how many links are alive and how long they last. Whether biological nanotubes form at random with respect to plasticity signals is an empirical question that these experiments cannot answer.

The same logic matters for applied continual learning with noisy, subject-specific data. In EEG-based emotion recognition, importance heuristics degrade under noise and covariate shift \citep{peire2026affect}, and fixed or log-scale shrinkage of importance is a cheap intervention to test in that setting.

\subsection{Limitations}
\begin{itemize}
\item The networks and benchmarks are small, a two-layer MLP and a four-layer CNN on MNIST and CIFAR-100. We have not tested larger models, longer task sequences, or other domains.
\item Study 3 used five seeds, and none of its comparisons survives correction for 27 tests. Twenty of its 27 selected values of $\lambda$ lie at the edge of the searched bracket, sometimes in opposite directions for the shrunk and unshrunk arms (for EWC on Split CIFAR-100, the upper edge without shrinkage and the lower edge with it), so its effect sizes are conditional on the bracket.
\item Study 4 used one $\lambda$ per benchmark and method, chosen without shrinkage at $n=200$. This favors the unshrunk arm at that sample size, may understate the benefit of shrinkage, and ties the CIFAR-100 superclass results to a penalty that may be weak. It also leaves open whether the persistent MAS gain reflects a bias of the importance measure or an interaction between $\beta$ and $\lambda$, which tuning $\lambda$ for each arm would resolve.
\item Fixed shrinkage is equivalent to adding a groupwise L2 penalty toward the anchor to the method's own penalty. We did not compare it with a separately tuned combination of the method and an L2 penalty.
\item The EB rule uses a normal model with tensor-level groups, and the sampling variance of SI ignores autocorrelation. We tested one shrinkage target, the group mean.
\item The CIFAR-100 superclass EWC cells and the $\lambda=0$ reference are pending a rerun.
\item Our links abstract dendritic nanotubes and do not model their physiology.
\end{itemize}

\section{Conclusion}\label{sec-conclusion}
The question in our title has a short answer. Links that follow activity, the feature we borrowed from dendritic nanotubes, did not help our networks, while random links did, because in expectation they shrink noisy importance estimates. A biologically inspired mechanism for sharing plasticity signals between neurons thus turned out, in its only working form, to be a shrinkage estimator in disguise. Taking that seriously led to a simple intervention for regularization-based continual learning, shrinking importance estimates toward their mean, and to a test that separates two reasons it helps. On Permuted MNIST, for EWC with few samples it reduces variance, and its benefit vanishes as samples grow. For MAS it also corrects something that more samples do not remove, which we read as bias. The heteroscedastic empirical Bayes rule we tried shrinks the largest Fisher-type estimates most, and a log-scale rule is the natural next step. Importance estimates are estimates, and treating them as such adds almost no computation.

\bibliographystyle{ACM-Reference-Format}
\bibliography{references}

\appendix

\section{Hyperparameters}\label{app-hyper}

\begin{table}[h]
\caption{Settings of the four studies.}
\label{tab-hyper}
\footnotesize
\setlength{\tabcolsep}{4pt}
\begin{tabularx}{\linewidth}{@{}l>{\raggedright\arraybackslash}X>{\raggedright\arraybackslash}X>{\raggedright\arraybackslash}X>{\raggedright\arraybackslash}X@{}}
\toprule
 & Study 1 & Study 2 & Study 3 & Study 4 \\
\midrule
Benchmarks & P-MNIST, 20 tasks & P-MNIST, 20 tasks & P-MNIST, 20 tasks; Split CIFAR-100, 10 tasks & P-MNIST, 20 tasks; CIFAR-100 superclass, 5 tasks \\
Model & MLP $2\times256$, wider MLP $2\times350$ & MLP $2\times256$ & MLP $2\times256$; CNN & MLP $2\times256$; CNN \\
Epochs, batch & 3, 128 & 3, 128 & 3, 128; 5, 64 & 3, 128; 5, 64 \\
Optimizer & Adam $10^{-3}$ & Adam $10^{-3}$ & Adam $10^{-3}$; SGD 0.01 with momentum 0.9 and proximal penalty & Adam $10^{-3}$ \\
Importance & unit-level squared gradients & unit-level squared gradients & EWC, EWC-unit, MAS ($n=200$); SI ($\xi=0.1$) & EWC, MAS ($n\in\{10,50,200,1000\}$) \\
Shrinkage & none & $\beta\in\{0.1,0.25,0.5,1\}$, per hidden layer & none, $\beta=0.25$, EB; per tensor & none, $\beta\in\{0.25,0.5\}$, EB; per tensor \\
Links & $k=8$, $\alpha=1$, $L\in\{25$, 50, 200, 500, 1000, $2000\}$; random and co-activity & $k=8$, $\alpha=1$, $L\in\{25,200\}$; five rules & -- & -- \\
$\lambda$ & $\{0.003,0.01,0.03\}$ & $\{0.001,0.003,0.01\}$ & pilot, then $\lambda^{*}\times\{1/3,1,3\}$ & pilot at $n=200$ \pending{values} \\
Validation & none & none & 10\% of training data & 10\% of training data \\
Seeds & 10 & 10 & 5 & 10 \\
\bottomrule
\end{tabularx}
\end{table}

\begin{table}[h]
\caption{Study 3, pilot $\lambda^{*}$ and the value selected for each arm on validation accuracy from $\lambda^{*}\times\{1/3,1,3\}$. \unadj{} marks a selected value at the edge of the bracket.}
\label{tab-lambda}
\small
\begin{tabular}{lllcccc}
\toprule
Benchmark & Method & Opt. & Pilot $\lambda^{*}$ & None & $\beta=0.25$ & EB \\
\midrule
P-MNIST & EWC & Adam & 0.01 & 0.01 & 0.01 & 0.00333\unadj \\
P-MNIST & EWC-unit & Adam & 0.01 & 0.00333\unadj & 0.00333\unadj & 0.01 \\
P-MNIST & SI & Adam & 0.001 & 0.003\unadj & 0.003\unadj & 0.003\unadj \\
P-MNIST & MAS & Adam & 0.001 & 0.003\unadj & 0.003\unadj & 0.003\unadj \\
P-MNIST & EWC & SGD & 0.1 & 0.3\unadj & 0.1 & 0.1 \\
Split CIFAR-100 & EWC & Adam & 10 & 30\unadj & 3.33\unadj & 30\unadj \\
Split CIFAR-100 & SI & Adam & 1 & 3\unadj & 3\unadj & 3\unadj \\
Split CIFAR-100 & MAS & Adam & 100 & 33.3\unadj & 100 & 100 \\
Split CIFAR-100 & EWC & SGD & 1 & 0.333\unadj & 0.333\unadj & 0.333\unadj \\
\bottomrule
\end{tabular}
\end{table}

\section{Simulation of the link process}\label{app-sim}
The simulation replicates the random rule of the training scripts without any training. A layer has $H=256$ units. At each step, links formed $L$ steps earlier die, and $k=8$ new links form between pairs drawn uniformly from the currently unlinked pairs, which is equivalent to taking the top $k$ of independent uniform scores over those pairs. The step size is $a_s=1/(1+\max_u d_u(s))$, and we record $\frac{H}{H-1}\,a_s\,\bar{d}(s)$, the shrinkage weight of Proposition~\ref{prop-links}, for each of 1407 steps per task over 20 tasks. To check how well linear shrinkage describes the realized operator, we diffused one fixed log-normal importance vector at every step of the eleventh task, accumulated the result, and regressed it on the vector. Table~\ref{tab-sim} gives the results. The logged density and coverage come from Studies 1 and 2.

\begin{table}[h]
\caption{The link process of the random rule, simulated without training. $\beff$ is averaged over 20 tasks; the first-task value includes the period in which links accumulate. The realized weight and $R^2$ come from a linear fit of the accumulated operator over one task.}
\label{tab-sim}
\small
\setlength{\tabcolsep}{4pt}
\begin{tabular}{rccccccc}
\toprule
$L$ & $\beff$ & $\beff$, first task & Realized weight & $R^2$ & Coverage & Density & Logged density, coverage \\
\midrule
25 & 0.227 & 0.226 & 0.224 & 0.997 & 0.793 & 0.0061 & 0.006, 0.792 \\
50 & 0.315 & 0.307 & 0.312 & 0.995 & 0.957 & 0.0122 & 0.012 \\
200 & 0.518 & 0.508 & 0.510 & 0.982 & 0.999 & 0.0488 & 0.049, 0.999 \\
500 & 0.652 & 0.608 & 0.635 & 0.957 & 0.999 & 0.1215 & 0.121 \\
1000 & 0.748 & 0.648 & 0.706 & 0.939 & 0.999 & 0.2407 & 0.241 \\
2000 & 0.832 & 0.652 & 0.828 & 0.906 & 0.999 & 0.4728 & 0.473 \\
\bottomrule
\end{tabular}
\end{table}

\clearpage
\section{Additional results}\label{app-results}

\begin{table}[h]
\caption{Study 2 at $\lambda=0.001$ and $\lambda=0.01$ (P-MNIST, 10 seeds). ACC is the mean in percent, and differences are seed-paired, in points, with SE in parentheses. Graph-free arms are compared with random links of the matched lifetime, and link rules with random links of the same lifetime. \sig{} marks Holm-adjusted $p<0.05$ within the family.}
\label{tab-study2-all}
\footnotesize
\setlength{\tabcolsep}{4pt}
\begin{tabular}{lrrrrrrrr}
\toprule
 & \multicolumn{4}{c}{$\lambda=0.001$} & \multicolumn{4}{c}{$\lambda=0.01$} \\
\cmidrule(lr){2-5}\cmidrule(lr){6-9}
Arm & ACC & $\Delta$ tag only & $\Delta$ random & CV & ACC & $\Delta$ tag only & $\Delta$ random & CV \\
\midrule
Tag only & 75.73 & -- & -- & 0.460 & 78.48 & -- & -- & 0.733 \\
$\beta=0.1$ & 75.45 & $-0.28$ (0.71) & -- & 0.423 & 81.09 & $+2.61$ (0.46)\sig & -- & 0.507 \\
$\beta=0.25$ & 77.04 & $+1.31$ (0.73) & $-0.17$ (0.47) & 0.389 & 81.20 & $+2.72$ (0.50)\sig & $+0.16$ (0.13) & 0.444 \\
$\beta=0.5$ & 78.24 & $+2.51$ (0.52)\sig & $-0.05$ (0.31) & 0.302 & 80.70 & $+2.22$ (0.45)\sig & $+0.18$ (0.14) & 0.342 \\
$\beta=1$ & 74.87 & $-0.86$ (0.44) & -- & 0.000 & 76.85 & $-1.63$ (0.47) & -- & 0.000 \\
Random, $L=25$ & 77.21 & $+1.48$ (0.43) & -- & 0.394 & 81.03 & $+2.55$ (0.54)\sig & -- & 0.454 \\
Random, $L=200$ & 78.29 & $+2.57$ (0.40)\sig & -- & 0.295 & 80.52 & $+2.04$ (0.46)\sig & -- & 0.333 \\
Co-activity, $L=25$ & 75.99 & $+0.26$ (0.60) & $-1.22$ (0.53) & 0.438 & 79.06 & $+0.58$ (0.39) & $-1.97$ (0.26)\sig & 0.644 \\
Co-activity, $L=200$ & 76.28 & $+0.55$ (0.65) & $-2.02$ (0.52)\sig & 0.415 & 78.32 & $-0.16$ (0.44) & $-2.20$ (0.30)\sig & 0.551 \\
Jaccard, $L=25$ & 76.29 & $+0.56$ (0.50) & $-0.92$ (0.48) & 0.439 & 78.96 & $+0.48$ (0.44) & $-2.07$ (0.47)\sig & 0.638 \\
Jaccard, $L=200$ & 76.29 & $+0.56$ (0.55) & $-2.01$ (0.57)\sig & 0.403 & 78.50 & $+0.02$ (0.46) & $-2.02$ (0.49)\sig & 0.547 \\
Correlation, $L=25$ & 76.82 & $+1.09$ (0.52) & $-0.39$ (0.36) & 0.434 & 78.37 & $-0.11$ (0.32) & $-2.66$ (0.45)\sig & 0.671 \\
Correlation, $L=200$ & 75.99 & $+0.26$ (0.71) & $-2.31$ (0.53)\sig & 0.384 & 78.35 & $-0.13$ (0.47) & $-2.17$ (0.44)\sig & 0.573 \\
Anti-correlation, $L=25$ & 75.39 & $-0.33$ (0.53) & $-1.81$ (0.41)\sig & 0.429 & 78.44 & $-0.04$ (0.32) & $-2.59$ (0.41)\sig & 0.698 \\
Anti-correlation, $L=200$ & 76.16 & $+0.43$ (0.51) & $-2.13$ (0.43)\sig & 0.400 & 78.43 & $-0.05$ (0.42) & $-2.09$ (0.40)\sig & 0.592 \\
\bottomrule
\end{tabular}
\end{table}

\begin{table}[h]
\caption{Study 3 retention and plasticity at each arm's selected $\lambda$ (5 seeds, means in percent).}
\label{tab-study3-bwt}
\small
\begin{tabular}{lllrrrrrr}
\toprule
 & & & \multicolumn{3}{c}{BWT} & \multicolumn{3}{c}{LA} \\
\cmidrule(lr){4-6}\cmidrule(lr){7-9}
Benchmark & Method & Opt. & None & $\beta=0.25$ & EB & None & $\beta=0.25$ & EB \\
\midrule
P-MNIST & EWC & Adam & $-10.4$ & $-8.7$ & $-12.9$ & 92.0 & 90.9 & 93.8 \\
 & EWC-unit & Adam & $-13.5$ & $-13.6$ & $-8.9$ & 93.6 & 93.5 & 90.1 \\
 & SI & Adam & $-11.8$ & $-10.3$ & $-12.2$ & 93.5 & 93.3 & 93.5 \\
 & MAS & Adam & $-12.6$ & $-10.5$ & $-12.3$ & 93.7 & 93.6 & 93.7 \\
 & EWC & SGD & $-7.9$ & $-10.8$ & $-10.6$ & 90.3 & 92.8 & 92.6 \\
\midrule
Split CIFAR-100 & EWC & Adam & $-1.3$ & 0.0 & $-1.2$ & 58.3 & 54.8 & 58.3 \\
 & SI & Adam & $-0.9$ & $-0.1$ & $-0.8$ & 55.8 & 54.3 & 55.7 \\
 & MAS & Adam & $-1.1$ & 0.0 & $-1.4$ & 57.5 & 54.2 & 57.8 \\
 & EWC & SGD & $-1.6$ & $-1.5$ & $-1.4$ & 61.0 & 59.2 & 60.5 \\
\bottomrule
\end{tabular}
\end{table}

\begin{table}[h]
\caption{Study 4 retention and plasticity at the smallest and largest sample sizes (10 seeds, means in percent).}
\label{tab-study4-bwt}
\small
\begin{tabular}{llrrrrrr}
\toprule
 & & \multicolumn{3}{c}{$n=10$} & \multicolumn{3}{c}{$n=1000$} \\
\cmidrule(lr){3-5}\cmidrule(lr){6-8}
Setting & Arm & ACC & BWT & LA & ACC & BWT & LA \\
\midrule
\multirow{4}{*}{P-MNIST, EWC} & None & 61.7 & $-34.6$ & 94.5 & 83.2 & $-8.5$ & 91.3 \\
 & $\beta=0.25$ & 81.3 & $-11.4$ & 92.1 & 82.8 & $-8.2$ & 90.6 \\
 & $\beta=0.5$ & 79.9 & $-11.7$ & 91.0 & 80.8 & $-9.9$ & 90.2 \\
 & EB & 77.6 & $-14.1$ & 91.0 & 81.9 & $-9.4$ & 90.8 \\
\midrule
\multirow{4}{*}{P-MNIST, MAS} & None & 80.2 & $-10.9$ & 90.5 & 81.5 & $-9.2$ & 90.3 \\
 & $\beta=0.25$ & 82.2 & $-8.1$ & 90.0 & 82.6 & $-7.6$ & 89.8 \\
 & $\beta=0.5$ & 81.6 & $-8.5$ & 89.7 & 81.6 & $-8.5$ & 89.6 \\
 & EB & 78.6 & $-12.3$ & 90.3 & 81.1 & $-9.7$ & 90.3 \\
\midrule
\multirow{4}{*}{\shortstack[l]{CIFAR-100\\superclass, MAS}} & None & 35.3 & $-31.0$ & 60.1 & 35.3 & $-31.0$ & 60.1 \\
 & $\beta=0.25$ & 35.0 & $-31.5$ & 60.3 & 35.5 & $-31.0$ & 60.3 \\
 & $\beta=0.5$ & 34.8 & $-31.8$ & 60.2 & 35.0 & $-31.5$ & 60.2 \\
 & EB & 35.1 & $-31.2$ & 60.0 & 35.2 & $-31.2$ & 60.1 \\
\bottomrule
\end{tabular}
\end{table}

\end{document}